\newtheorem{thm}{Theorem}
\newtheorem{lem}[thm]{Lemma}
\newtheorem{prop}[thm]{Proposition}
 \newcommand\dl[1]{}
 \newcommand\s[1]{}
\newcommand{\namecite}[1]{\citeauthor{#1}~\shortcite{#1}}
\newcommand\citep{\cite}
\newcommand\citet{\namecite}
\begin{document}
%
\title{Deterministic Policy Optimization by Combining Pathwise\\ and Score Function Estimators for Discrete Action Spaces}
\author{Daniel Levy, Stefano Ermon \\
Department of Computer Science, Stanford University \\
\texttt{\{danilevy,ermon\}@cs.stanford.edu}
}
\maketitle

\hyphenation{REINFORCE}

\begin{abstract}
Policy optimization methods have shown great promise in solving complex reinforcement and imitation learning tasks. While model-free methods are broadly applicable, they often require many samples to optimize complex policies. Model-based methods greatly improve sample-efficiency but at the cost of poor generalization, requiring a carefully handcrafted model of the system dynamics for each task. Recently, hybrid methods have been successful in trading off applicability for improved sample-complexity. However, these have been limited to continuous action spaces. In this work, we present a new hybrid method based on an approximation of the dynamics as an expectation over the next state under the current policy. This relaxation allows us to derive a novel hybrid policy gradient estimator, combining score function and pathwise derivative estimators, that is applicable to discrete action spaces. We show significant gains in sample complexity, ranging between $1.7$ and $25\times$, when learning parameterized policies on Cart Pole, Acrobot, Mountain Car and Hand Mass. Our method is applicable to both discrete and continuous action spaces, when competing pathwise methods are limited to the latter. 
\end{abstract} 

\section{Introduction}

Reinforcement and imitation learning using deep neural networks have achieved impressive results on a wide range of tasks spanning manipulation \cite{levine2016end,levine2014learning}, locomotion \cite{silver2014deterministic}, games \cite{mnih2015human,silver2016mastering}, and autonomous driving~\cite{ho2016generative,li2017inferring}. Model-free methods search for optimal policies without explicitly modeling the system's dynamics~\cite{williams1992simple,schulman2015trust}. Most model-free algorithms build an estimate of the policy gradient by sampling trajectories from the environment and perform gradient ascent. However, these methods suffer from either very high sample complexity due to the generally large variance of the estimator, or are restricted to policies with few parameters.

On the other hand, model-based reinforcement learning methods learn an explicit model of the dynamics of the system. A policy can then be optimized under this model by back-propagating the reward signal through the learned dynamics. While these approaches can greatly improve sample efficiency, the dynamics model needs to be carefully hand-crafted for each task. Recently, hybrid approaches, at the interface of model-free and model-based, have attempted to balance sample-efficiency with generalizability, with notable success in robotics \cite{levine2016end}.


Existing methods, however, are limited to continuous action spaces \cite{levine2014learning,heess2015learning}. 
In this work, we introduce a hybrid reinforcement learning algorithm for deterministic policy optimization that is applicable to continuous \emph{and} discrete action spaces. Starting from a class of deterministic policies, we relax the corresponding policy optimization problem to one over a carefully chosen set of stochastic policies under approximate dynamics. This relaxation allows the derivation of a novel policy gradient estimator, which combines pathwise derivative and score function estimator. This enables incorporating model-based assumptions while remaining applicable to discrete action spaces. We then perform gradient-based optimization on this larger class of policies while slowly annealing stochasticity, converging to an optimal deterministic solution. We additionally justify and bound the dynamics approximation under certain assumptions. Finally, we complement this estimator by a scalable method to estimate the dynamics, first introduced in \cite{levine2014learning}
, and with a general extension to non-differentiable rewards
, rendering it applicable to a large class of tasks. 
Our contributions are as follows:
\begin{itemize}
\item We introduce a novel deterministic policy optimization method, that leverages a model of the dynamics and can be applied to any action space, whereas existing methods are limited to continuous action spaces. We also provide theoretical guarantees for our approximation.
\item We show how our estimator can be applied to a broad class of problems by extending it to discrete rewards, and utilizing a sample-efficient dynamics estimation method 
\cite{levine2014learning}.
\item We show that this method successfully optimizes complex neural network deterministic policies without additional variance reduction techniques. We present experiments on tasks with discrete action spaces where model-based or hybrid methods are not applicable. On these tasks, we show significant gains in terms of sample-complexity, ranging between $1.7$ and $25\times$. 
\end{itemize}

\section{Related Work}

Sample efficiency is a key metric for RL methods, especially when used on real-world physical systems. Model-based methods improve sample efficiency at the cost of defining and learning task-specific dynamics models, while model-free methods typically require significantly more samples but are more broadly applicable.

Model-based methods approximate dynamics using various classes of models, spanning from gaussian processes \cite{deisenroth2011pilco} to neural networks \cite{fu2016one}. Assuming the reward function is known and differentiable, the policy gradient can be computed exactly by differentiating through the dynamics model, thus enabling gradient-based optimization. \citet{heess2015learning} extends this idea to stochastic dynamics and policies by expressing randomness as an exogenous variable and making use of the ``re-parameterization trick". This requires differentiability of the dynamics function w.r.t. both state and action, limiting its applicability to continuous action spaces. On the other hand, model-free algorithms are broadly applicable but require significantly more samples as well as variance reduction techniques through control variates \cite{mnih2016asynchronous,ho2016model} or trust regions \cite{schulman2015trust} to converge in reasonable time.

Recently, hybrid approaches have attempted to balance sample efficiency with applicability. \citet{levine2014learning} locally approximate dynamics to fit locally optimal policies, which then serve as supervision for global policies. A good dynamics model also enables generating artificial trajectories to limit strain on the simulator\s{??}\dl{I don't think I undestood, could you clarify?}~\cite{gu2016continuous}. 
However, these works are once again limited to continuous action spaces. Our work can also be considered a hybrid algorithm that can handle and improve sample-efficiency for discrete action spaces as well.

\section{Background}\label{sec:background}

In this section, we first present the canonical RL formalism. We then review the score function and pathwise derivative estimators, and their respective advantages. We show applications of these with first, the REINFORCE estimator \cite{williams1992simple}, followed by a standard method for model-based policy optimization consisting of back-propagating through the dynamics equation.

\paragraph{Notations and definitions} Let $\mathcal{X}$ and $\mathcal{A}$ denote the state and action spaces, respectively. $f$ refers to a deterministic dynamics function i.e. $f:\mathcal{X} \times \mathcal{A} \to \mathcal{X}$. A deterministic policy is a function $\pi : \mathcal{X} \to \mathcal{A}$, and a stochastic policy is a conditional distribution over actions given state, denoted $\pi(a|x)$. For clarity, when considering parameterized policies, stochastic ones will be functions of $\phi\in\Phi$, and deterministic ones functions of $\theta\in\Theta$. Throughout this work, dynamics will be considered deterministic.

We consider the standard RL setting where an agent interacts with an environment and obtains rewards for its actions. The agent's goal is to maximize its expected cumulative reward. 
Formally, there exists 
an initial distribution $p_0$ over $\mathcal{X}$ and a collection of reward functions $(r_t)_{t\leq T}$ with $r_t:\mathcal{X} \to \mathbf{R}$. 
$x_0$ is sampled from $p_0$, and at each step, the agent is presented with $x_t$ and chooses an action $a_t$ according to a policy $\pi$. 
$x_{t+1}$ is then computed as $f(x_t, a_t)$. In the finite horizon setting, the episode ends when $t=T$. 
The agent is then provided with the cumulative reward, $r(x_0, \ldots, x_T) = \sum_{t\leq T} r_t(x_t)$. The goal of the agent is to find a policy $\pi$ that maximizes the expected cumulative reward $J(\pi) = \mathbf{E}_{x_0, a_0, \ldots, a_{T-1}}\left[ \sum_{t\leq T} r_t(x_t) \right]$.

\subsection{Score Function Estimator and Pathwise Derivative}

We now review two approaches for estimating the gradient of an expectation. Let $p(x;\eta)$ be a probability distribution over a measurable set $\mathcal{X}$ and $\varphi:\mathcal{X}\to \mathbf{R}$. We are interested in obtaining an estimator of the following quantity: $\nabla_\eta \mathbf{E}_{x\sim p(x;\eta)} \left[ \varphi(x) \right]$.

\paragraph{Score Function Estimator} The score function estimator relies on the `log-trick'. It relies on the following identity (given appropriate regularity assumptions on $p$ and $\varphi$):
\begin{equation}
\nabla_\eta \mathbf{E}_{x\sim p(x;\eta)} \left[ \varphi(x) \right] = \mathbf{E}_{x\sim p(x;\eta)} \left[ \varphi(x) \nabla_\eta \log p(x;\eta) \right]
\end{equation}

This last quantity can then be estimated using Monte-Carlo sampling. This estimator is very general and can be applied even if $x$ is a discrete random variable, as long as $\log p(x;\eta)$ is differentiable w.r.t. $\eta$ for all $x$. 
However, it suffers from high-variance \cite{glasserman2013monte}. 

\paragraph{Pathwise Derivative} 
The pathwise derivative estimator depends on $p(x;\eta)$ being re-parameterizable, i.e., there exists a function $g$ and a distribution $p'$ (independent of $\eta$) such that sampling $x\sim p$ is equivalent to sampling $\epsilon \sim p'$ and computing $x = g(\eta, \epsilon)$. Given that observation, $\nabla_\eta \mathbf{E}_{x\sim p(x;\eta)} \left[ \varphi(x) \right] = \nabla_\eta \mathbf{E}_{\epsilon \sim p'(\epsilon)} \left[ \varphi\left(g(\eta, \epsilon)\right) \right] = \mathbf{E}_{\epsilon \sim p'(\epsilon)} \left[ \nabla_\eta \varphi(g(\eta, \epsilon)) \right]$. This quantity can once again be estimated using Monte-Carlo sampling, but is conversely lower variance \cite{glasserman2013monte}. This requires $\varphi(g(\eta, \epsilon))$ to be a differentiable function of $\eta$ for all $\epsilon$.

We override $J(\pi_\theta)$ (resp. $J(\pi_\phi)$) as $J(\theta)$ (resp. $J(\phi)$), and aim at maximizing this objective function using gradient ascent.
\subsection{REINFORCE}


Using the score function estimator, we can derive the REINFORCE rule \cite{williams1992simple}, which is applicable without assumptions on the dynamics or action space. With $\pi_\phi$ a stochastic policy, we want to maximize $J(\phi) = \mathbf{E}_{x_0, a_0, \ldots, a_T}\left[ \sum_{t \leq T} r_t(x_t) \right]$ where $a_t \sim \pi_\phi(\cdot|x_t)$.
Then the REINFORCE rule is $\nabla_\phi J(\phi) = \mathbf{E}_{x_0, a_0, \ldots, a_T}\left[ \sum_{t \leq T} \left(\sum_{t' \geq t} r_{t'}(x_t') \right)\nabla_\phi \log \pi_\phi (a_t | x_t) \right]$. We can estimate this quantity using Monte-Carlo sampling. This only requires differentiability of $\pi_\phi$ w.r.t. $\phi$ and does not assume knowledge of $f$ and $r$. This estimator is however not applicable to deterministic policies.
\subsection{A Method for Model-based Policy Optimization}

Let $\pi_\theta$ be a deterministic policy, differentiable w.r.t. both $x$ and $\theta$. Assuming we have knowledge of $f$ and $r$, 
we can directly differentiate $J$: $\nabla_\theta J = \mathbf{E}_{x_0}\left[\sum_{t\leq T} \nabla_x r_t(x_t) \cdot \nabla_\theta x_t \right]$. The first term, 
$\nabla_x r_t$, is easily computed given knowledge of the reward functions. The second term, given knowledge of the dynamics, can be computed by recursively differentiating $x_{t+1} = f\left(x_t, \pi_\theta(x_t)\right)$, i.e. $\nabla_\theta x_0 = 0$ and:
\begin{equation}
\nabla_\theta x_{t+1} = \nabla_x f \cdot \nabla_\theta x_t + \nabla_a f \cdot \nabla_\theta \pi_\theta(x_t)
\end{equation}


This method can be extended to stochastic dynamics and policies by using a pathwise derivative estimator i.e. by re-parameterizing the noise \cite{heess2015learning}.

This method is applicable for a deterministic and differentiable policy w.r.t. both $x$ and $\theta$
, differentiable dynamics w.r.t. both variables and differentiable reward function. This implies that $\mathcal{A}$ must be continuous. This model-based method 
aims at utilizing the dynamics and differentiating the dynamics equation. 

\section{Relaxing the Policy Optimization Problem}

Deterministic policies are optimal in a non-game theoretic setting\s{cite}\dl{I thought this was well-known/trivially true, will try to find a good ref}. Our objective, in this work, is to perform hybrid policy optimization for deterministic policies for both continuous and discrete action spaces. In order to accomplish this, we present a relaxation of the RL optimization problem. This relaxation allows us to derive, in the subsequent section, a policy gradient estimator that differentiates approximate dynamics and leverages model-based assumptions for any action spaces. Contrary to traditional hybrid methods, \emph{this does not assume differentiability of the dynamics function w.r.t. the action variable, thus elegantly handling discrete action spaces}. As in the previous section, we place ourselves in the finite-horizon setting. 
\s{refer to previous eq.}\dl{I don't think I understand}
We assume terminal rewards and convex state space $\mathcal{X}$.

\subsection{Relaxing the dynamics constraint}

In this section, we describe our relaxation. Starting from a class of deterministic policies $D(\Theta)$ parameterized by $\theta\in\Theta$, we can construct a class of stochastic policies $S(\Phi, \Lambda) \supset D(\Theta)$ parameterized by $(\phi\in\Phi,\lambda\in\Lambda)$, that can be chosen as close as desired to $D(\Theta)$ by adjusting $\lambda$. On this extended class, we can derive a low-variance gradient estimator. We first explain the relaxation, then detail how to construct $S(\Phi, \Lambda)$ from $D(\Theta)$, and finally, provide guarantees on the approximation.

Formally, the RL problem with deterministic policy and dynamics can be written as: 
\begin{equation}
\begin{aligned}
& \underset{\pi_\theta\in D(\Theta)}{\text{maximize}}
& & \mathbf{E}_{x_0 \sim p_0}\left[ r(x_T) \right] \\
& \text{subject to}
& & x_{t+1} = f(x_t, \pi_\theta(x_t))
\end{aligned}
\end{equation}

Given that $\pi_\theta$ is deterministic, the constraint can be equivalently rewritten as:
\begin{equation}
x_{t+1} = \mathbf{E}_{a \sim \pi_\theta(\cdot|x_t)} \left[f(x_t, a) \right]
\end{equation}

Having made this observation, we proceed to relaxing the optimization from over $D(\Theta)$ to $S(\Phi, \Lambda)$, with the constraint now being over approximate and in particular differentiable dynamics.  The relaxed optimization program is therefore:
\begin{equation}\label{eq:relaxed pb}
\begin{aligned}
& \underset{\pi_{\phi,\lambda}\in S(\Phi,\Lambda)}{\text{maximize}}
& & \mathbf{E}_{x_0 \sim p_0}\left[ r(x_T) \right] \\
& \text{subject to}
& & x_{t+1} = \mathbf{E}_{a \sim \pi_{\phi,\lambda}(\cdot|x_t)} \left[f(x_t, a) \right]
\end{aligned}
\end{equation}

This relaxation casts the optimization to be over stochastic policies, which allows us to derive a policy gradient in Theorem~\ref{th:rpg}, but under different, approximated dynamics.
We later describe how to project the solution in $S(\Phi,\Lambda)$ back to an element on $D(\Theta)$.

\subsection{Construction of $S(\Phi,\Lambda)$ from $D(\Theta)$}

Here we show how to construct stochastic policies from deterministic ones while providing a parameterized `knob' to control randomness and closeness to a deterministic policy.

\paragraph{Discrete action spaces} The natural parameterization is as a softmax model. However, this requires careful parameterization, in order to ensure that all policies of $D(\Theta)$ are included. We use the deterministic policy as a prior of which we can control the strength. Formally, we choose a class of parameterized functions $\{ g_\psi : \mathcal{X} \to \mathbf{R}^{|\mathcal{A}|}, \psi\in\Psi \}$. 
\s{how is this class chosen? is this supposed to be for any choice of this class?}\dl{any function work since it's un-normalized log prob}
For $\theta,\psi$ and $\lambda\geq 0$, we can define the following stochastic policy $\pi_{\theta,\psi,\lambda}$ s.t. $\pi_{\theta,\phi,\lambda}(a|x) \propto \exp\left[g_\psi(x)_a + \frac{\mathbf{1}_{a=\pi_\theta(x)}}{\lambda}\right]$. We have therefore defined $S(\Phi,\Lambda)$ where $\Phi \triangleq \Theta \times \Psi$ and $\Lambda \triangleq [0, \infty)$. We easily verify that $D(\Theta) \subset S(\Phi,\Lambda)$, as, for any $\theta\in\Theta$, we can choose an arbitrary $\psi\in\Psi$ and define $\phi = (\theta,\psi) \in\Phi$, we then have $\pi_\theta = \lim_{\lambda \to 0} \pi_{\phi, \lambda}$.


\s{this construction is incomplete without specifying what
$\{ g_\psi : \mathcal{X} \to \mathbf{R}^{|\mathcal{A}|}, \psi\in\Psi \}$
is
}\dl{I don't understand why since this can be anything}

 
\paragraph{Continuous action spaces} In the continuous setting, a very simple parameterization is by adding Gaussian noise, of which we can control variance. Formally, given $D(\Theta)$, let $\Phi \triangleq \Theta, \Lambda \triangleq [0,\infty)$ and $S(\Phi, \Lambda) \triangleq \{x \mapsto \mathcal{N}(\pi_\theta(x), \lambda^2I), \theta \in \Theta, \lambda \geq 0\}$. The surjection can be derived by setting $\lambda$ to $0$. More complicated stochastic parameterizations can be easily derived as long as the density remains tractable.

\paragraph{Rounding} We assume that there is a surjection from $S(\Phi,\Lambda)$ to $D(\Theta)$, s.t. any stochastic policy can be made deterministic by setting $\lambda$ to a certain value. For the above examples, the mapping consists of setting $\lambda$ to $0$. 

Similar in spirit to simulated annealing for optimization \cite{kirkpatrick1983optimization}, we optimize over $\phi$, while slowly annealing $\lambda$ to converge to an optimal solution in $D(\Theta)$.

\subsection{Theoretical Guarantees}

Having presented the relaxation, we now provide theoretical justifications, to show, first, that given conditions on the stochastic policy
, a trajectory computed with approximate dynamics under a stochastic policy is close to the trajectory computed with the true dynamics under a deterministic policy. 
We additionally present connections in the case where our dynamics are discretization of a continuous-time system.

\paragraph{Bounding the deviation from real dynamics} In this paragraph, we assume that the action space is continuous. Given the terminal reward setting, the amount of approximation can be defined as the divergence between $\left(x_t\right)_{t\leq T}$, the trajectory from following a deterministic policy $\pi_\theta$, and $\left(\tilde{x}_t\right)_{t\leq T}$, the trajectory corresponding to the approximate dynamics with a policy $\pi_{\phi,\lambda}$. This will allow us to relate the optimal value of the relaxed program with the optimal value of the initial optimization problem.

\begin{thm}[Approximation Bound]\label{th:approximation}
Let $\pi_\theta$ and $\pi_{\phi,\lambda}$ be a deterministic and stochastic policy, respectively, s.t. $\forall x\in\mathcal{X}, \mathbf{E}_{a\sim\pi_{\phi,\lambda}(\cdot|x)}a = \pi_\theta$. Let us suppose that $f$ is $\rho$-lipschitz and $\pi_\theta$ is $\rho'$-lipschitz. We further assume that $\sup_{x}\mathrm{tr}\left[ \mathrm{Var}_{\pi_{\phi,\lambda}(\cdot | x)}a \right] \leq M$ and that $\alpha \triangleq \rho(\rho'+1) < 1$. We have the following guarantee:
\begin{equation}
\mathbf{E}_{x_0} ||x_T - \tilde{x}_T|| \leq \frac{1}{1-\alpha} \sqrt{M}
\end{equation}

Furthermore, if $\left[\pi_\phi(\cdot|x)\right]_{x\in\mathcal{X}}$ are distributions of fixed variance $\lambda^2I$, the approximation converges towards $0$ when $\lambda \to 0$. 
\end{thm}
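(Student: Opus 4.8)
The plan is to reduce the bound to a scalar linear recursion on the per-step deviation and then sum a geometric series. First note that, because the relaxed dynamics $\tilde{x}_{t+1} = \mathbf{E}_{a\sim\pi_{\phi,\lambda}(\cdot|\tilde{x}_t)}[f(\tilde{x}_t,a)]$ replace the sampled next state by its expectation, both $(x_t)$ and $(\tilde{x}_t)$ are \emph{deterministic} functions of $x_0$; the only randomness left is $x_0\sim p_0$. I would fix $x_0$ (so that $x_0 = \tilde{x}_0$ and $\delta_0 := \|x_0 - \tilde{x}_0\| = 0$), derive a pointwise recursion for $\delta_t := \|x_t - \tilde{x}_t\|$, solve it, and only take $\mathbf{E}_{x_0}$ at the very end, the bound being uniform in $x_0$.

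For the one-step estimate, write $x_{t+1} - \tilde{x}_{t+1} = f(x_t,\pi_\theta(x_t)) - \mathbf{E}_{a}[f(\tilde{x}_t,a)]$ with $a\sim\pi_{\phi,\lambda}(\cdot|\tilde{x}_t)$, and insert the intermediate point $f(\tilde{x}_t,\pi_\theta(\tilde{x}_t))$. By the triangle inequality this splits into (i) $\|f(x_t,\pi_\theta(x_t)) - f(\tilde{x}_t,\pi_\theta(\tilde{x}_t))\|$ and (ii) $\|f(\tilde{x}_t,\pi_\theta(\tilde{x}_t)) - \mathbf{E}_a[f(\tilde{x}_t,a)]\|$. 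Term (i) is controlled purely by smoothness: joint $\rho$-Lipschitzness of $f$ together with $\rho'$-Lipschitzness of $\pi_\theta$ gives $\rho(\|x_t-\tilde{x}_t\| + \rho'\|x_t-\tilde{x}_t\|) = \rho(\rho'+1)\delta_t = \alpha\,\delta_t$, which is exactly where the contraction factor $\alpha$ and the requirement $\alpha<1$ come from.

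Term (ii) is the crux, and the only place the mean-matching hypothesis and the variance bound enter. Here I would use $\mathbf{E}_a[a] = \pi_\theta(\tilde{x}_t)$ to rewrite it as the Jensen gap $\|\mathbf{E}_a[f(\tilde{x}_t,\mathbf{E}_a a) - f(\tilde{x}_t,a)]\|$; pulling the norm inside the expectation (convexity of $\|\cdot\|$) and applying Lipschitzness of $f$ in the action bounds it by $\rho\,\mathbf{E}_a\|a - \mathbf{E}_a a\|$. A final application of Jensen (concavity of $\sqrt{\cdot}$, i.e.\ Cauchy--Schwarz) turns this into $\rho\sqrt{\mathbf{E}_a\|a-\mathbf{E}_a a\|^2} = \rho\sqrt{\mathrm{tr}\left[\mathrm{Var}_{\pi_{\phi,\lambda}(\cdot|\tilde{x}_t)}a\right]} \leq \rho\sqrt{M}$, uniformly in $\tilde{x}_t$ by the $\sup_x$ assumption. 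I expect the subtlety to be precisely this term: with only Lipschitz (not second-order) regularity one cannot do better than the $\sqrt{M}$ scaling, and matching the clean constant in the statement amounts to absorbing this $\rho$ factor.

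Combining (i) and (ii) yields the affine recursion $\delta_{t+1} \leq \alpha\,\delta_t + \sqrt{M}$ with $\delta_0 = 0$. Unrolling gives $\delta_T \leq \sqrt{M}\sum_{k=0}^{T-1}\alpha^k \leq \frac{\sqrt{M}}{1-\alpha}$ since $\alpha<1$, and taking $\mathbf{E}_{x_0}$ preserves the inequality, establishing the main bound. The final claim is then immediate: if each $\pi_{\phi,\lambda}(\cdot|x)$ has fixed covariance $\lambda^2 I$ on a $d$-dimensional action space, then $\mathrm{tr}\left[\mathrm{Var}\right] = d\lambda^2$, so $M = d\lambda^2$ and the bound reads $\frac{\sqrt{d}}{1-\alpha}\lambda \to 0$ as $\lambda\to 0$.
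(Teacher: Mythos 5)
Your proof is correct and follows essentially the same route as the paper's: a one-step recursion $\delta_{t+1}\leq\alpha\,\delta_t+\rho\sqrt{M}$ obtained from Lipschitzness plus the bound $\mathbf{E}\|a-\mathbf{E}a\|\leq\sqrt{\mathrm{tr}\,\mathrm{Var}\,a}$ (the paper's Lemma, which you rederive via Jensen/Cauchy--Schwarz), closed by a geometric sum; your triangle-inequality split through $f(\tilde{x}_t,\pi_\theta(\tilde{x}_t))$ is just a reordering of the paper's steps. You are in fact slightly more careful than the paper on one point: the recursion's additive term is $\rho\sqrt{M}$, and the clean constant $\frac{\sqrt{M}}{1-\alpha}$ requires absorbing $\rho\leq\alpha<1$, which you flag explicitly while the paper drops it silently.
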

\begin{proof}
See Appendix.
\end{proof}
We know that solving the relaxed optimization problem will provide an upper-bound on the expected terminal reward from a policy in $D(\Theta)$. Given a $\rho''$-lipschitz reward function, this bound shows that the optimal value of the true optimization program is within $\frac{\rho''}{1-\alpha} \sqrt{M}$ of the optimal value of the relaxed one.

\s{can you get somebody from the group to double check your proofs?}
\s{that variance is a strange object. what if the actions are say strings?}\dl{this is in the continuous action space case}

\paragraph{Equivalence in continuous-time systems} The relaxation has strong theoretical guarantees when the dynamics are a discretization of a continuous-time system. With analogous notations as before, let us consider a continuous-time dynamical system: $x(0) = x_0, \dot{x} = f(x(t), a(t))$.

A $\delta$-discretization of this system can be written as $x_{t+1} = x_t + \delta f(x_t, a_t)$. We can thus write the dynamics of the relaxed problem: $x_{t+1} = x_t + \delta\mathbf{E}_{a\sim \pi_{\phi,\lambda}(a|x_t)} \left[f(x_t, a) \right]$.

Intuitively, when the discretization step $\delta$ tends to $0$, the policy converges to a deterministic one. In the limit, our approximation is the true dynamics for continuous time systems. Proposition~\ref{prop:ct} formalizes this idea.

\begin{prop}\label{prop:ct}
Let $(X^\delta_t)_{t\leq T}$ be a trajectory obtained from the $\delta$-discretized relaxed system, following a stochastic policy. Let $x(t)$ be the continuous time trajectory. Then, with probability $1$:
\begin{equation}
X^\delta_t \to x(t)
\end{equation}
\end{prop}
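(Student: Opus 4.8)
The plan is to read this as a fluid-limit (ODE-method / Kurtz-type) statement and prove it by splitting each stochastic Euler step into an \emph{averaged drift} plus a \emph{martingale-difference noise} term, then showing that the drift part converges to the ODE solution by the usual Euler estimate while the accumulated noise vanishes because its step size shrinks faster than the number of steps grows. First I would fix notation: since ``following a stochastic policy'' means we sample $a_k\sim\pi_{\phi,\lambda}(\cdot\,|\,X^\delta_k)$ and set $X^\delta_{k+1}=X^\delta_k+\delta f(X^\delta_k,a_k)$, the relevant continuous-time limit is the deterministic \emph{relaxed} ODE $\dot x=\bar f(x)$, $x(0)=x_0$, where $\bar f(x)\triangleq\mathbf{E}_{a\sim\pi_{\phi,\lambda}(\cdot|x)}[f(x,a)]$. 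Using that $f$ is $\rho$-Lipschitz (so $\bar f$ is Lipschitz, with constant $L\le\rho(1+\rho')$ in the Gaussian case), defining $\xi_k\triangleq f(X^\delta_k,a_k)-\bar f(X^\delta_k)$, and noting that the conditional second moment is uniformly bounded via $\mathbf{E}[\|\xi_k\|^2\mid\mathcal F_k]=\tfrac12\mathbf{E}_{a,a'}\|f(X^\delta_k,a)-f(X^\delta_k,a')\|^2\le\rho^2\,\mathrm{tr}\,\mathrm{Var}_{\pi_{\phi,\lambda}}a\le\rho^2 M$, I would write the one-step identity
\[
X^\delta_{k+1}=X^\delta_k+\delta\,\bar f(X^\delta_k)+\delta\,\xi_k,\qquad \mathbf{E}[\xi_k\mid\mathcal F_k]=0 .
\]

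For the core estimate, set $e_k\triangleq X^\delta_k-x(k\delta)$, compare against the integral form $x((k+1)\delta)=x(k\delta)+\int_{k\delta}^{(k+1)\delta}\bar f(x(s))\,ds$, and sum from $0$ to $k-1$ with $e_0=0$ to get
\[
e_k=\delta\sum_{j<k}\bigl(\bar f(X^\delta_j)-\bar f(x(j\delta))\bigr)+M_k+\sum_{j<k}R_j ,
\]
where $M_k\triangleq\delta\sum_{j<k}\xi_j$ is the noise martingale and $R_j$ is the local Euler truncation error with $\|R_j\|=O(\delta^2)$, hence $\sum_{j<k}\|R_j\|=O(T\delta)$ for $k\le T/\delta$. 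Bounding the first sum by $\delta L\sum_{j<k}\|e_j\|$ and applying the discrete Gr\"onwall inequality gives
\[
\max_{k\le T/\delta}\|e_k\|\le e^{LT}\Bigl(\max_{k\le T/\delta}\|M_k\|+O(T\delta)\Bigr).
\]
It then suffices to control the martingale: each increment satisfies $\mathbf{E}\|\delta\xi_j\|^2\le\delta^2\rho^2M$ and there are $T/\delta$ of them, so by orthogonality of martingale increments and Doob's $L^2$ maximal inequality, $\mathbf{E}[\max_{k}\|M_k\|^2]\le 4\rho^2 M\,T\,\delta\to0$.

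The inequality above already yields convergence in $L^2$, hence in probability, for every $\delta\to0$; the genuinely delicate part is the claimed almost-sure statement, since ``$\delta\to0$'' ranges over a continuum rather than a sequence. I would therefore prove it along an arbitrary summable sequence $\delta_n\downarrow0$ (e.g.\ $\delta_n=2^{-n}$): Chebyshev together with the Doob bound gives $\mathbf{P}(\max_k\|M_k\|>\varepsilon)\le 4\rho^2 M T\delta_n/\varepsilon^2$, whose sum over $n$ is finite, so Borel--Cantelli forces $\max_k\|M_k\|\to0$ almost surely, and with it $\max_{k\le T/\delta_n}\|e_k\|\to0$ a.s., i.e.\ $X^{\delta_n}_{\lfloor t/\delta_n\rfloor}\to x(t)$ uniformly on $[0,T]$ with probability $1$. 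I expect this upgrade from $L^2$/in-probability convergence to almost-sure convergence (and the accompanying care about the precise meaning of ``with probability $1$'' as $\delta\to0$) to be the main obstacle; the Euler local-error bound, the Gr\"onwall step, and the variance estimate are routine, and all constants ($\rho$, $\rho'$, $M$, $T$) are exactly the ones already assumed in Theorem~\ref{th:approximation}. As a cross-check, this is precisely a Kurtz-style fluid limit, so the averaging of sampled actions over vanishing time steps recovers the expectation-based relaxed dynamics, which is the equivalence the proposition asserts.
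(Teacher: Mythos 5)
Your proof is correct, and it is genuinely different from what the paper does: the paper offers no argument at all for Proposition~\ref{prop:ct}, deferring entirely to the citation \cite{munos2006policy}, whereas you reconstruct a self-contained Kurtz-style fluid-limit proof. Your decomposition into averaged drift plus martingale-difference noise, the discrete Gr\"onwall estimate, the Doob $L^2$ maximal bound $\mathbf{E}\bigl[\max_k\|M_k\|^2\bigr]\le 4\rho^2MT\delta$, and the Borel--Cantelli upgrade along a summable sequence $\delta_n\downarrow 0$ are all sound (your identity $\mathbf{E}[\|\xi_k\|^2\mid\mathcal F_k]=\tfrac12\mathbf{E}_{a,a'}\|f(X^\delta_k,a)-f(X^\delta_k,a')\|^2\le\rho^2 M$ is a correct and clean way to import the variance hypothesis of Theorem~\ref{th:approximation}). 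You also deserve credit for flagging the one point the paper's statement glosses over: ``with probability $1$'' as $\delta\to 0$ over a continuum is not meaningful without fixing a sequence, and your summable-sequence resolution is the standard honest fix. What the citation-based route buys the paper is generality it does not have to argue for (Munos treats the continuous-time setting directly, with its own regularity hypotheses); what your route buys is transparency about exactly which assumptions drive the result --- and this exposes one caveat you should state explicitly: you need $\bar f(x)=\mathbf{E}_{a\sim\pi_{\phi,\lambda}(\cdot|x)}[f(x,a)]$ to be Lipschitz in $x$ for both the Gr\"onwall step and the well-posedness of the limiting ODE. Your bound $L\le\rho(1+\rho')$ is valid for the Gaussian construction, but for the paper's discrete-action softmax construction $\pi_{\theta,\psi,\lambda}(a|x)\propto\exp\bigl[g_\psi(x)_a+\mathbf{1}_{a=\pi_\theta(x)}/\lambda\bigr]$ the drift $\bar f$ can jump across decision boundaries of $\pi_\theta$, so in that case the proposition itself requires additional smoothness (e.g.\ on the policy's dependence on $x$) that neither you nor the paper makes explicit; a one-line hypothesis ``assume $x\mapsto\pi_{\phi,\lambda}(\cdot|x)$ is Lipschitz in total variation and $f$ is bounded, so $\bar f$ is Lipschitz'' would close this gap without changing any of your estimates.
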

\begin{proof}
See \cite{munos2006policy}.
\end{proof}

\section{Relaxed Policy Gradient}

The relaxation presented in the previous section allows us to differentiate through the dynamics for any action space. In this section, we derive a novel deterministic policy gradient estimator, Relaxed Policy Gradient (RPG), that combines pathwise derivative and score function estimators, and is applicable to all action spaces. We then show how to apply our estimator, using a sample-efficient and scalable method to estimate the dynamics, first presented in \cite{levine2014learning}
. We conclude by extending the estimator to discrete reward functions, effectively making our algorithm applicable to a wide variety of problems. For simplicity, we omit $\lambda$ (the stochasticity) from our derivations and consider it fixed.

\subsection{Estimator}
We place ourselves in the relaxation setting. Letting $\pi_\phi$ be a parameterized stochastic policy, we define $J(\phi) = \mathbf{E}_{x_0,a_0,\ldots,a_{T-1}} \left[r(x_T)\right]$. Our objective is to find $\phi^* = \arg\max_\phi J(\phi)$. To that aim, we wish to perform gradient ascent on $J$.

\begin{thm}[Relaxed Policy Gradient]\label{th:rpg} Given a trajectory $(x_0, a_1, \ldots, x_T)$, sampled by following a stochastic policy $\pi_\phi$, we define the following quantity $\hat{g} = \nabla_x r(x_T) \cdot \nabla_\phi x_T$ where $\nabla_\phi x_T$ can be computed with the recursion defined by $\nabla_\phi x_0 = 0$ and:
\begin{equation}
\begin{aligned}
\nabla_\phi x_{t+1} & = \nabla_x f(x_t, a_t) \cdot \nabla_\phi x_{t}\\
& + f(x_t, a_t)\nabla_\phi \log \pi_\phi(a_t | x_t)\\ 
& + f(x_t, a_t)\nabla_x \log \pi_\phi(a_t | x_t) \cdot \nabla_\phi x_{t}
\end{aligned}
\end{equation}

$\hat{g}$ is an unbiased estimator of the policy gradient of the relaxed problem, defined in Equation~\ref{eq:relaxed pb}.
\end{thm}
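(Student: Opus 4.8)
The plan is to show that $\mathbf{E}[\hat g] = \nabla_\phi J(\phi)$, where $J(\phi)=\mathbf{E}_{x_0}[r(x_T)]$ and $x_T$ is the \emph{deterministic} state produced by the relaxed dynamics $x_{t+1}=\mathbf{E}_{a\sim\pi_\phi(\cdot|x_t)}[f(x_t,a)]$ of Equation~\ref{eq:relaxed pb}. Since, for a fixed $x_0$, the whole state trajectory $(x_t)_{t\le T}$ is a deterministic function of $\phi$, I would first peel off the reward by the chain rule, $\nabla_\phi J=\mathbf{E}_{x_0}[\nabla_x r(x_T)\,\nabla_\phi x_T]$, so that $\nabla_x r(x_T)$ is deterministic given $x_0$ and factors out of the sampling expectation. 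The problem then reduces to the single identity $\mathbf{E}[\hat G_T]=\nabla_\phi x_T$, where $\hat G_t$ is the random accumulator defined by the theorem's recursion (with $\hat G_0=0$) and the expectation is over the actions $a_t\sim\pi_\phi(\cdot|x_t)$ drawn along the deterministic trajectory.

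Next I would derive the \emph{exact} recursion for $G_t\triangleq\nabla_\phi x_t$. Writing $h(x,\phi)=\int \pi_\phi(a|x)f(x,a)\,da$ and differentiating $x_{t+1}=h(x_t,\phi)$, the $\phi$-dependence enters both explicitly and through $x_t$, giving $G_{t+1}=\partial_x h\cdot G_t+\partial_\phi h$. I would then rewrite the two partials using the log-derivative (``score function'') identity twice: $\partial_\phi h=\mathbf{E}_a[f\,\nabla_\phi\log\pi_\phi]$, and, because $x$ appears in both the density and in $f$, $\partial_x h=\mathbf{E}_a[\nabla_x f]+\mathbf{E}_a[f\,\nabla_x\log\pi_\phi]$. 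Substituting these into $G_{t+1}=\partial_x h\,G_t+\partial_\phi h$ produces exactly the expectation-over-$a$ version of the theorem's three-term recursion.

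The core of the argument is an induction on $t$ establishing $\mathbf{E}[\hat G_t]=G_t$. The base case $\hat G_0=0=G_0$ is immediate. For the step, the crucial structural fact is that, because the states $x_0,\dots,x_t$ are deterministic given $x_0$, the freshly drawn action $a_t\sim\pi_\phi(\cdot|x_t)$ is independent of the accumulator $\hat G_t$, which depends only on $a_0,\dots,a_{t-1}$. Taking expectations of the three terms then factorizes the two ``cross'' terms, $\mathbf{E}[\nabla_x f(x_t,a_t)\hat G_t]=\mathbf{E}_a[\nabla_x f]\,\mathbf{E}[\hat G_t]$ and $\mathbf{E}[f(x_t,a_t)\nabla_x\log\pi_\phi(a_t|x_t)\hat G_t]=\mathbf{E}_a[f\,\nabla_x\log\pi_\phi]\,\mathbf{E}[\hat G_t]$, while the middle term reproduces $\partial_\phi h$ exactly by the score-function identity. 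Combining the inductive hypothesis $\mathbf{E}[\hat G_t]=G_t$ with the exact recursion yields $\mathbf{E}[\hat G_{t+1}]=\partial_x h\,G_t+\partial_\phi h=G_{t+1}$, closing the induction; reinserting $\nabla_x r(x_T)$ and averaging over $x_0$ gives $\mathbf{E}[\hat g]=\nabla_\phi J$.

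I expect the main obstacle to be the independence/factorization step: one must verify that $a_t$ is genuinely independent of $\hat G_t$, which hinges on the states being the deterministic relaxed states rather than single-sample rollouts $x_{t+1}=f(x_t,a_t)$ (for the latter the nonlinearity of $r$ and of $h$ makes $\hat g$ biased), and that matrix products are kept in the correct left-to-right order when factoring the expectations. A secondary, more routine obstacle is justifying the two interchanges of $\nabla$ and $\int$ underlying the log-derivative identities, which requires the regularity/dominated-convergence hypotheses on $\pi_\phi$ and $f$, together with differentiability of $f$ in $x$ and of $\log\pi_\phi$ in both $x$ and $\phi$.
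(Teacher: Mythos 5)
Your proposal is correct and follows the same route as the paper's own proof: differentiate the relaxed dynamics $x_{t+1}=\mathbf{E}_{a\sim\pi_\phi(\cdot|x_t)}\left[f(x_t,a)\right]$, split the $\phi$-dependence into the explicit part through the density and the implicit part through $x_t$, and convert $\nabla\pi_\phi$ into $\pi_\phi\nabla\log\pi_\phi$ so that the recursion becomes an expectation over $a\sim\pi_\phi(\cdot|x_t)$ --- your $G_{t+1}=\partial_x h\cdot G_t+\partial_\phi h$ computation is term-for-term the paper's display. Where you go beyond the paper is the last step: the paper concludes with a single sentence asserting that the sampled recursion ``corresponds to a Monte-Carlo estimate,'' whereas your induction $\mathbf{E}[\hat{G}_t]=\nabla_\phi x_t$, resting on the independence of the fresh action $a_t$ from the accumulator $\hat{G}_t$, is precisely the argument that sentence leaves implicit. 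Moreover, the obstacle you flag is real and is exactly where the paper's proof is loose: your factorization is exact only if the recursion is evaluated along the \emph{deterministic} relaxed trajectory (states fixed given $x_0$, only actions sampled), while the theorem's wording --- a trajectory ``sampled by following $\pi_\phi$,'' i.e.\ $x_{t+1}=f(x_t,a_t)$ as in Algorithm~\ref{alg:dctpg} --- makes the states random, in which case the nonlinearity of $f$, $r$ and $\pi_\phi$ in $x$ leaves a residual bias that the paper's one-line conclusion does not address. So your proof is the rigorous version of the paper's argument, valid under the deterministic-relaxed-state reading of the statement, and it correctly localizes the genuine gap in the paper's own proof to that final Monte-Carlo step; the regularity conditions you mention for the two interchanges of $\nabla$ and $\int$ match the differentiability assumptions the paper invokes informally.
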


\begin{proof}
See Appendix.
\end{proof}

The presented RPG estimator is effectively a combination of pathwise derivative and score function estimators. Intuitively, the pathwise derivative cannot be used for discrete action spaces as it requires differentiability w.r.t. the action. To this end, we differentiate pathwise through $x$ and handle the discrete portion with a score function estimator.

\paragraph{Benefits of pathwise derivative} Gradient estimates through pathwise derivatives are considered lower variance \cite{glasserman2013monte}. In the context of RL, intuitively, REINFORCE suffers from high variance as it requires many samples to properly establish credit assignment, increasing or decreasing probabilities of actions seen during a trajectory based on the reward. Conversely, as RPG estimates the dependency of the state on $\phi$, the gradient can directly adjust $\phi$ to steer the trajectory to high reward regions. 
This is possible because the estimator utilizes a model of the dynamics. While REINFORCE assigns credit indifferently to actions, RPG adjusts the entire trajectory.

However, when examining our expression for RPG, the computation requires the gradient of the dynamics w.r.t. the state, unlike REINFORCE. In the next section, we present a scalable and sample-efficient dynamics fitting method, further detailed in \cite{levine2016end}, to estimate this term. 

\subsection{Scalable estimation of the dynamics}\label{sec:dynamics}

The Relaxed Policy Gradient estimator 
can be computed exclusively from sampled trajectories if provided with the state-gradient of the dynamics $\nabla_x f$. However, this is not an information available to the agent and thus needs to be estimated from samples. We review a method, first presented in \cite{levine2014learning}, that provides estimation of the state-gradient of the dynamics, incorporating information from multiple sampled trajectories. Although the dynamics are deterministic, we utilize a stochastic estimation to account for modeling errors.

Formally, we have $m$ sampled trajectories $\{ \tau_i \}_{i \leq m}$ of the form $\{ x^i_t, a^i_t, x^i_{t+1}\}_{i\leq m, t \leq T}$ from the dynamical system. Our goal is not only to accurately model the dynamics of the system, but also to estimate correctly the state-gradient. This prevents us from simply fitting a parametric function $f_\phi$ to the samples. Indeed, this approach would provide no guarantee that $\nabla_x f_\phi$ is a good approximation of $\nabla_x f$. This is especially important as the variance of the estimator is dependent on the quality of our approximation of $\nabla_x f$ and not the approximation of $f$.

In order to fit a dynamics model, we choose to locally model the dynamics of the discretized stochastic process as $X_{t+1} \sim \mathcal{N}(A_tX_t + B_t a_t + c_t, F_t)$, parameterized by $\left(\{A_t, B_t, C_t\}\right)_{t\leq T}$. 
We choose this approach as it does not model global dynamics but instead a good time-varying local model around the latest sampled trajectories, which corresponds to the term we want to estimate. Under that model, the term we are interested in estimating is then $A_t$. 

While this approach is simple, it requires a large number of samples to be accurate. To greatly increase sample-efficiency, we assume a prior over sampled trajectories. We use the GMM approach of \cite{levine2014learning} which corresponds to softly piecewise linear dynamics. At each iteration, we update our prior using the current trajectories and fit our model. 
This allows us to utilize both past trajectories as well as nearby time steps that are included in the prior. 
Another key advantage of this stochastic estimation is that it elegantly handles discontinuous dynamics models. Indeed, by averaging the dynamics locally around the discontinuities, it effectively smooths out the discontinuities of the underlying deterministic dynamics. We refer to \cite{levine2016end} for more detailed derivations.

\subsection{Extension to discrete rewards}

Prior to this work, estimators incorporating models of the dynamics such as \cite{heess2015learning,levine2014learning} were constrained to continuously differentiable rewards. We present an extension of this type of estimators to a class of non-continuous rewards. To do so, we make assumptions on the form of the reward and approximate it by a smooth function.

We assume that $\mathcal{X} = \mathbf{R}^n$ and that the reward can be written as a sum of indicator functions, i.e.:
\begin{equation}
r(x) = \sum_{i=1}^k \lambda_i \mathbf{1}_{x\in K_i}
\end{equation}
\noindent where $(K_i)_{i\leq k}$ are compact subsets of $\mathbf{R}^n$. This assumption covers a large collection of tasks where the goal is to end up in a compact subspace of $\mathcal{X}$. For each $K_i$, we are going to approximate $\mathbf{1}_{x\in K_i}$ by an arbitrarily close smooth function.

\begin{prop}[Smooth approximation of an indicator function]\label{prop:urysohn}
Let $K$ be a compact of $\mathbf{R}^n$. For any neighborhood $\Omega$ of $K$, there exists, $\psi:\mathbf{R}^n \to \mathbf{R}$, smooth, s.t. $\mathbf{1}_K \leq \psi \leq \mathbf{1}_\Omega$.
\end{prop}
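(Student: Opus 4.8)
The plan is to prove this by \emph{mollification}, i.e.\ to realize the desired $\psi$ as the convolution of the indicator of a suitably ``fattened'' copy of $K$ with a smooth compactly supported bump function; this is the smooth analogue of Urysohn's lemma specialized to $\mathbf{R}^n$, where convolution yields smoothness for free. First I would reduce to the case of an \emph{open} neighborhood: since $\Omega$ is a neighborhood of $K$, its interior $U = \mathrm{int}\,\Omega$ is open and still contains $K$, and any $\psi$ with $\mathbf{1}_K \le \psi \le \mathbf{1}_U$ automatically satisfies $\psi \le \mathbf{1}_\Omega$. The key quantity to extract next is the separation $\delta = \mathrm{dist}(K, U^c)$; because $K$ is compact, $U^c$ is closed, and the two are disjoint, the continuous map $x \mapsto \mathrm{dist}(x, U^c)$ attains a positive minimum on $K$, so $\delta > 0$.

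Next I would fix a scale $\sigma = \delta/4$ and introduce the intermediate set $A = \{ x : \mathrm{dist}(x, K) \le 2\sigma \}$, so that $K \subset A \subset U$, together with a standard mollifier $\eta_\sigma \ge 0$, smooth, supported in the ball $B(0,\sigma)$, with $\int \eta_\sigma = 1$. I would then define
\begin{equation}
\psi(x) = (\mathbf{1}_A * \eta_\sigma)(x) = \int \mathbf{1}_A(x - y)\,\eta_\sigma(y)\,dy.
\end{equation}
Smoothness of $\psi$ is immediate: convolving the bounded compactly supported function $\mathbf{1}_A$ with the smooth compactly supported kernel $\eta_\sigma$ lets one differentiate under the integral sign arbitrarily often. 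The bound $0 \le \psi \le 1$ is equally immediate, since $\psi$ is an average of the $\{0,1\}$-valued $\mathbf{1}_A$ against the probability density $\eta_\sigma$.

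The two remaining inequalities are where the choice of radii does the work. To obtain $\mathbf{1}_K \le \psi$ it suffices to check $\psi \equiv 1$ on $K$: for $x \in K$ and $|y| \le \sigma$ one has $\mathrm{dist}(x - y, K) \le |y| \le \sigma \le 2\sigma$, so $x - y \in A$, the integrand equals $\eta_\sigma(y)$, and $\psi(x) = \int \eta_\sigma = 1$. To obtain $\psi \le \mathbf{1}_U$ it suffices to check $\psi \equiv 0$ off $U$: if $x \notin U$ then $\mathrm{dist}(x,K) \ge \delta = 4\sigma$, so for $|y| \le \sigma$ we get $\mathrm{dist}(x - y, K) \ge 3\sigma > 2\sigma$, whence $x - y \notin A$ and $\psi(x) = 0$. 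I expect the only real obstacle to be this bookkeeping with the three radii ($\sigma$, the fattening $2\sigma$, and the separation $4\sigma$): they must be chosen so that a single mollification simultaneously pushes the value up to $1$ on all of $K$ and down to $0$ outside $U$, which forces the support radius $\sigma$ to be strictly smaller than both the fattening margin and the remaining gap to $U^c$. Everything else—positivity of $\delta$, smoothness, and the $[0,1]$ bound—is routine.
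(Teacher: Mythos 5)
Your proof is correct, and it rests on the same core device as the paper's—convolving an indicator function with a smooth compactly supported bump—but the two arguments are organized quite differently. The paper reduces to a model case: it builds the standard bump $\phi(x)=\exp(-\tfrac{1}{1-\|x\|^2})\mathbf{1}_{\|x\|\leq 1}$, convolves it with the indicator of a large ball to get $F_n$, rescales, and then asserts that the general statement follows ``by scaling and translating.'' You instead mollify directly: you fatten $K$ to $A=\{x:\mathrm{dist}(x,K)\leq 2\sigma\}$ with $\sigma$ tied quantitatively to $\delta=\mathrm{dist}(K,U^c)$, and set $\psi=\mathbf{1}_A * \eta_\sigma$, with the three radii $\sigma < 2\sigma < 4\sigma=\delta$ doing exactly the work you identify. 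Your route buys genuine generality that the paper's reduction glosses over: an arbitrary compact $K$ is not a scaled translate of a ball, so for, say, a disconnected $K$ with a tight neighborhood $\Omega$, no ball fits between $K$ and $\Omega$ and the paper's reduction would need an additional finite-cover-and-combine step (e.g., $\psi = 1-\prod_i(1-f_i)$ over a cover of $K$ by balls whose doublings lie in $\Omega$) that it never spells out; your construction handles all compacts in one stroke, and also makes the smoothness and the bounds $\mathbf{1}_K\leq\psi\leq\mathbf{1}_U$ quantitative and checkable line by line. The only loose end, too trivial to count as a gap, is the degenerate case $U=\mathbf{R}^n$, where $U^c=\emptyset$ and $\delta$ is undefined (or $+\infty$); there any finite choice of $\sigma$ makes your argument go through verbatim, since the constraint from $U^c$ is vacuous.
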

\begin{proof}
See Appendix.
\end{proof}

We can now approximate each $(\mathbf{1}_{K_i})_{i \leq K}$ by a smooth function $(\psi_i)_{i \leq K}$. Given this surrogate reward $\tilde{r} \triangleq \sum_{i=1}^k \lambda_i \psi_i$, we can apply our estimator. In practice, however, discrete reward functions are often defined as $r(s) = \lambda \mathbf{1}_{h(s) \geq 0}$, where $h$ is a function from $\mathcal{X}$ to $\mathbf{R}$. 
We approximate the reward function by $\tilde{r}_\alpha(s) = \lambda \sigma(\alpha h(s))$ ($\sigma$ being the sigmoid function). If $h$ is $\mathcal{C}^1$, then $\tilde{r}$ is too and $\tilde{r}_\alpha$ pointwise converges to $r$. We present in the Appendix an example of the approximate reward functions for the Mountain Car task.

Given these arbitrarily close surrogate rewards, we can now prove a continuity result, i.e. the sequence of optimal policies under the surrogate rewards converges to the optimal policy.
\begin{prop}[Optimal policies under surrogate reward functions]\label{prop:surrogate}
Let $r$ be a reward function defined as $r = \mathbf{1}_K$ and $\pi^*$ the optimal policy under this reward. Let us define $V^\pi(x)$ to be the value of state $x$ under the policy $\pi$. Then, there exists a sequence of smooth reward functions $(r_n)_{n\in\mathbf{N}}$ s.t. if $\pi_n$ is optimal under the reward $r_n$, $\forall x \in \mathcal{X}, \lim_{n\to\infty} V^{\pi_n}(x) = V^{\pi^*}$.
\end{prop}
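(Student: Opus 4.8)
The plan is to build the approximating rewards directly from Proposition~\ref{prop:urysohn} and then to sandwich the true value of $\pi_n$ between $V^{\pi^*}$ and a quantity measuring how much probability mass the terminal state places just outside $K$. Since $K$ is compact, hence closed, I would first fix a nested, decreasing family of open neighborhoods shrinking to $K$, e.g. $\Omega_n \triangleq \{x : d(x,K) < 1/n\}$, so that $\Omega_n \supseteq \Omega_{n+1}$ and $\bigcap_n \Omega_n = K$. Applying Proposition~\ref{prop:urysohn} to each pair $(K,\Omega_n)$ yields smooth functions $r_n \triangleq \psi_n$ with $\mathbf{1}_K \leq \psi_n \leq \mathbf{1}_{\Omega_n}$. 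Two facts are immediate and drive the argument: the $\psi_n$ are uniformly bounded ($0 \le \psi_n \le 1$), and they converge pointwise to $\mathbf{1}_K$ (for $x\in K$ equality holds for all $n$, while for $x\notin K$ one has $x\notin\Omega_n$ for $n$ large, so $\psi_n(x)=0$).

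Next, working in the terminal-reward setting, write $V^\pi_r(x)=\mathbf{E}_\pi[\mathbf{1}_K(x_T)\mid x_0=x]=P_\pi(x_T\in K\mid x_0=x)$ and $V^\pi_{r_n}(x)=\mathbf{E}_\pi[\psi_n(x_T)\mid x_0=x]$, with all values taken under the \emph{true} reward unless subscripted otherwise. Because $K\subseteq\Omega_n$ gives the pointwise bound $0\le \psi_n-\mathbf{1}_K\le \mathbf{1}_{\Omega_n\setminus K}$, I obtain, for every policy $\pi$,
\begin{equation}
0 \le V^\pi_{r_n}(x) - V^\pi_r(x) \le P_\pi(x_T\in\Omega_n\setminus K\mid x_0=x) \le \delta_n,
\end{equation}
where $\delta_n \triangleq \sup_\pi P_\pi(x_T\in\Omega_n\setminus K)$. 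Taking the supremum over $\pi$ and using $\psi_n\ge\mathbf{1}_K$ sandwiches the optimal values as $V^{\pi^*}(x) \le V^*_{r_n}(x) \le V^{\pi^*}(x)+\delta_n$. Finally, since $\pi_n$ is optimal for $r_n$, i.e. $V^{\pi_n}_{r_n}=V^*_{r_n}$, the same inequality applied to $\pi_n$ yields
\begin{equation}
V^{\pi^*}(x) - \delta_n \;\le\; V^{\pi_n}_{r_n}(x) - \delta_n \;\le\; V^{\pi_n}(x) \;\le\; V^{\pi^*}(x),
\end{equation}
so that $|V^{\pi_n}(x)-V^{\pi^*}(x)|\le\delta_n$ for all $x$.

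It then remains only to show $\delta_n\to 0$, and this is exactly the step I expect to be the main obstacle. The sets $\Omega_n\setminus K$ decrease to the empty set, so for any \emph{fixed} terminal distribution the mass in the shell vanishes by continuity of measure; the difficulty is that $\delta_n$ is a supremum over policies, and the optimizing policy may change with $n$ and steer the terminal state onto the vanishing boundary shell. Controlling this requires a non-concentration (regularity) hypothesis — for instance that $\sup_\pi P_\pi(x_T\in\Omega_n\setminus K)\to 0$, which holds when the reachable terminal-state distributions admit densities uniformly integrable near $\partial K$, or when $K$ is the closure of its interior and the dynamics place no atom on $\partial K$. Under such an assumption $\delta_n\to 0$ and the squeeze gives $\lim_{n\to\infty}V^{\pi_n}(x)=V^{\pi^*}(x)$ for every $x$. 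I would state this regularity condition explicitly as a hypothesis, since without some control on boundary mass the conclusion can genuinely fail: if states arbitrarily close to $K$ but outside it are reachable with higher probability than $K$ itself, the surrogate-optimal $\pi_n$ will prefer them and $V^{\pi_n}$ need not approach $V^{\pi^*}$.
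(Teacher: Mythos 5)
Your sandwich argument is correct as far as it goes, but it takes a genuinely different route from the paper, and the obstacle you flag at the end is precisely the point where the paper's own proof is thin. The paper works in the purely deterministic setting (deterministic dynamics and policies, terminal indicator reward), so that $V^{\pi}(x)\in\{0,1\}$, picks neighborhoods with $\mathrm{diameter}(\Omega_n \setminus K)\to 0$, and argues two cases: if $V^{\pi^*}(x)=0$ no policy reaches $K$ and nothing is lost; if $V^{\pi^*}(x)=1$, the $r_n$-optimal policy $\pi_n$ must terminate at some $x_T^n\in\Omega_n$, so $\mathrm{d}(x_T^n,K)\to 0$, and the paper concludes via a claimed Cauchy property that $x_T^n$ converges to a point of $K$. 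Your proof instead bounds, uniformly over \emph{all} (possibly stochastic) policies, $0\le V^\pi_{r_n}-V^\pi_r\le \delta_n$ with $\delta_n=\sup_\pi P_\pi(x_T\in\Omega_n\setminus K)$, and squeezes $|V^{\pi_n}(x)-V^{\pi^*}(x)|\le\delta_n$; this is more general and quantitative, at the price of the explicit hypothesis $\delta_n\to 0$. You are right that this hypothesis cannot simply be dropped, and the paper's deterministic argument does not escape the issue: since $V^{\pi_n}(x)=\mathbf{1}_K(x_T^n)$ and the indicator is discontinuous, convergence of the terminal states \emph{to} $K$ does not imply convergence of the \emph{values} --- the surrogate functions of Proposition~\ref{prop:urysohn} satisfy only $\mathbf{1}_K\le\psi_n\le\mathbf{1}_{\Omega_n}$, so $\psi_n$ may equal $1$ on part of $\Omega_n\setminus K$, an $r_n$-optimal policy may terminate there for every $n$, and then $V^{\pi_n}(x)=0$ for all $n$ while $V^{\pi^*}(x)=1$. (The paper's Cauchy claim is also unjustified --- $\mathrm{d}(x_T^n,K)\to 0$ yields only subsequential limit points in $K$ --- but that is minor next to the value-discontinuity issue.)

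Two remarks on closing the gap. In the paper's deterministic setting, your concern can be patched without a regularity hypothesis by strengthening the construction of the surrogates: choose $\psi_n$ with $\{\psi_n=1\}=K$ exactly (e.g., multiply the Urysohn function by $e^{-h}$ for a smooth $h\ge 0$ vanishing exactly on $K$); then, when $V^{\pi^*}(x)=1$, $\pi_n$ attains surrogate value $1$ (it dominates $\pi^*$, which terminates in $K$) and is therefore forced to terminate \emph{in} $K$, giving $V^{\pi_n}(x)=1$ for all $n$. In the stochastic setting you consider, some non-concentration condition on boundary mass, of the kind you state, is indeed genuinely necessary, and making it an explicit hypothesis --- as you propose --- is the honest formulation of the proposition.
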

\begin{proof}
See Appendix.
\end{proof}

\section{Practical Algorithm}\label{sec:practical}

In Algorithm~\ref{alg:dctpg}, we present a practical algorithm to perform deterministic policy optimization based on the previously defined RPG. In summary, given $D(\Theta)$, we construct our extended class of stochastic policies $S(\Phi, \Lambda)$. We then perform gradient-based optimization over $\phi$, i.e. stochastic policies, while slowly annealing $\lambda$, converging to a 
policy in $D(\Theta)$. We easily extend the estimator to the case of non-terminal rewards as:
\begin{equation}
\nabla_\phi J(\phi, \lambda) = \sum_{t\leq T} \nabla_x r(x_t) \cdot \nabla_\phi x_t
\end{equation}
\begin{small}
\begin{algorithm}[h!]
   \caption{Relaxed Policy Gradient}
   \label{alg:example}
\begin{algorithmic}
   \STATE {\bfseries Inputs:} Environment giving $x_{t+1}, \nabla_x r(x_t)$ given $x_t, a_t$, deterministic class of policies $D(\Theta)$, stochastic class of policies $S(\Phi, \Lambda)$, number of training episodes $N_\mathrm{ep}$, learning rate schedule $(\alpha_N)_{N\leq N_\mathrm{ep}}$, annealing schedule $(\gamma_N)_{N\leq N_\mathrm{ep}}$, initial parameters $\phi_0\in\Phi, \lambda_0 \in\Lambda$.
   \STATE Initialize $\phi\gets \phi_0$ and $\lambda\gets\lambda_0$.
   \FOR{$N=1$ to $N_\mathrm{ep}$}
   \STATE Sample $\{ \tau_i \}_{i \leq m}$ with policy $\pi_{\phi,\lambda}$
   \STATE Update GMM prior over dynamics
   \STATE Fit dynamics $(A_t, B_t, C_t)_{t\leq T}$
   \FOR{$i=1$ {\bfseries to} $m$}
   \STATE $\hat{g_i} \gets 0$
   \STATE $\nabla_\phi x_{0} \gets 0$
   \FOR{$t=0$ {\bfseries to} $T-1$}
   \STATE $\hat{g_i} \gets \hat{g_i} + \nabla_x r(x_t) \cdot \nabla_\phi x_{t-1}$
   \STATE $\hat{f}_t = x_{t+1} - x_{t}$
   \STATE $\nabla_\phi x_{t+1} \gets A_t \cdot \nabla_\phi x_{t} + \hat{f}_t\nabla_\phi \log \pi_{\phi,\lambda}(a_t | x_t)$
   \STATE \hspace{2cm} $+ \hat{f}_t \nabla_x \log \pi_{\phi,\lambda}(a_t | x_t) \cdot \nabla_\phi x_{t}$
   \ENDFOR
   \STATE $\hat{g_i} \gets \hat{g_i} + \nabla_x R(x_T) \cdot \nabla_\phi x_T$
   \ENDFOR
   \STATE $\phi \gets \phi + \alpha_N \hat{g}$
   \STATE $\lambda\gets \gamma_N\lambda$
   \ENDFOR
   \STATE {\bfseries Return:} $\pi_{\phi,\lambda=0} \in D(\Theta)$
\end{algorithmic}
\label{alg:dctpg}
\end{algorithm}
\end{small}

\section{Experiments}
We empirically evaluate our algorithm to investigate the following questions: 1) How much does RPG improve sample complexity? 2) Can RPG learn an optimal policy for the true reward using a smooth surrogate reward? 3) How effective is our approximation of the dynamics?

In an attempt to evaluate the benefits of the relaxation compared to other estimators as fairly as possible, we do not use additional techniques, such as trust regions \cite{schulman2015trust}. Both our method and the compared ones can incorporate these improvements, but we leave study of these more sophisticated estimators for future work.

\subsection{Classical Control Tasks}

We apply our Relaxed Policy Gradient (RPG) to a set of classical control tasks with \emph{discrete} action spaces: Cart Pole~\cite{barto1983neuronlike}, Acrobot~\cite{sutton1996generalization}, Mountain Car~\cite{sutton1996generalization} and Hand Mass~\cite{munos2006policy}. For the first three tasks, we used the OpenAI Gym \cite{brockman2016openai} implementation and followed \cite{munos2006policy} for the implementation of Hand Mass. A diagram of our tasks are presented in Figure~\ref{fig:tasks}.

\begin{figure}
\centering
\frame{\includegraphics[height=2.2cm]{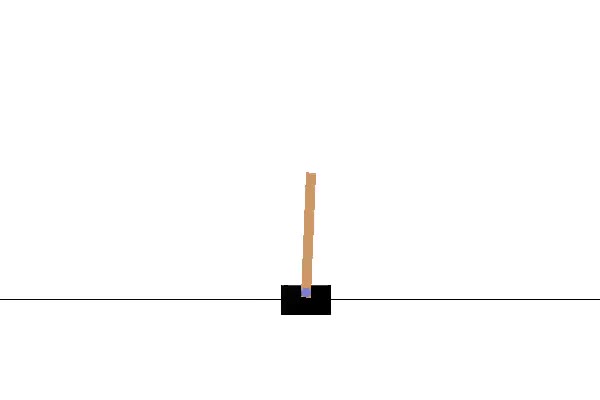}}
\frame{\includegraphics[height=2.2cm]{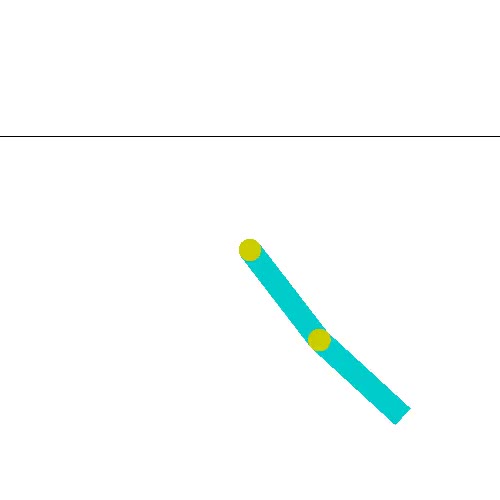}}
\frame{\includegraphics[height=2.2cm]{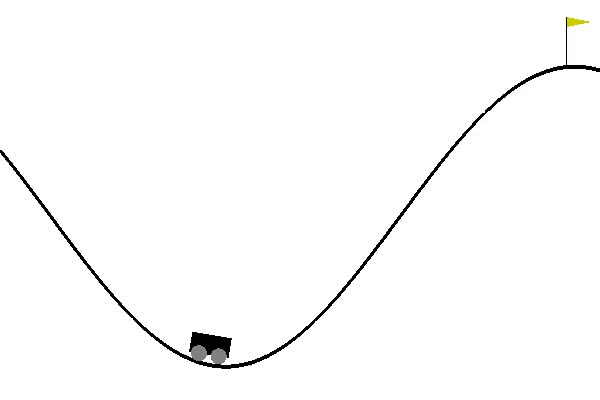}}
\includegraphics[height=2.2cm]{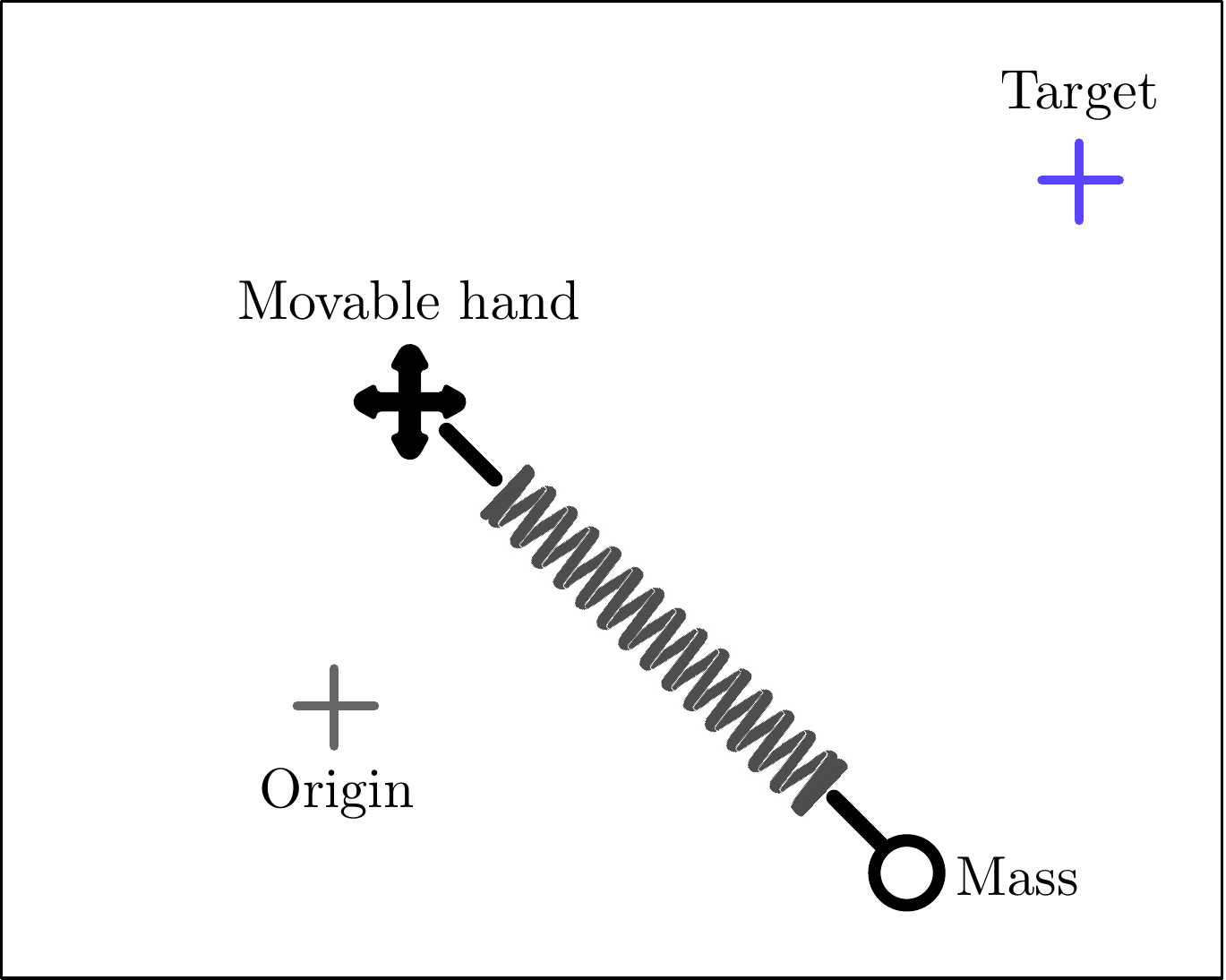}
\caption{\textit{Top row}: Cart Pole, Acrobot. \textit{Bottom row}: Mountain Car, Hand Mass.}
\label{fig:tasks}
\end{figure}

\paragraph{Baselines} We compare our methods against two different baselines: a black-box derivative free optimization algorithm called Cross-Entropy Method (CEM) \cite{szita2006learning} and the Actor-Critic algorithm (A3C) \cite{mnih2016asynchronous}. CEM is an evolutionary-type algorithm that samples a different set of parameters at each step, simulates using these parameters and keeps only the top $2\%$, using those to re-fit the distribution over parameters
. CEM is known to work  well \cite{duan2016benchmarking}, but lacks sample efficiency as the algorithm does not exploit any structure. A3C is a variant of REINFORCE \cite{williams1992simple} that learns a value function to reduce the variance of the estimator. For each task and algorithm, we evaluate for $5$ distinct random seeds.

\subsection{Results and Analysis}
We present the learning curves for all tasks in Figure~\ref{fig:all_curves}. Even when training is done with surrogate rewards, in both instances we report the actual reward. We do not show CEM learning curves as these are not comparable due to the nature of the algorithm; each CEM episode is equivalent to $20$ RPG or A3C episodes\s{why not rescale so that they are comprable?}\dl{it looked weird when I tried back then because we would only see 1/25th of the learning curve}, we instead report final performance in Tables~\ref{tab:samples} and \ref{tab:perfspring}. For all tasks, the policy is parameterized by a $2$-layer neural network with \texttt{tanh} non-linearities and $64$ hidden units, with a softmax layer outputting a distribution over actions. In practice, we optimize over stochastic policies with a fixed $\lambda$. In our experiments, it converged to a near deterministic policy since the optimization is well-conditioned enough in that case. 
The policies are trained using Adam \cite{kingma2014adam}.

\begin{figure*}[ht]
\begin{center}
\includegraphics[width=\textwidth]{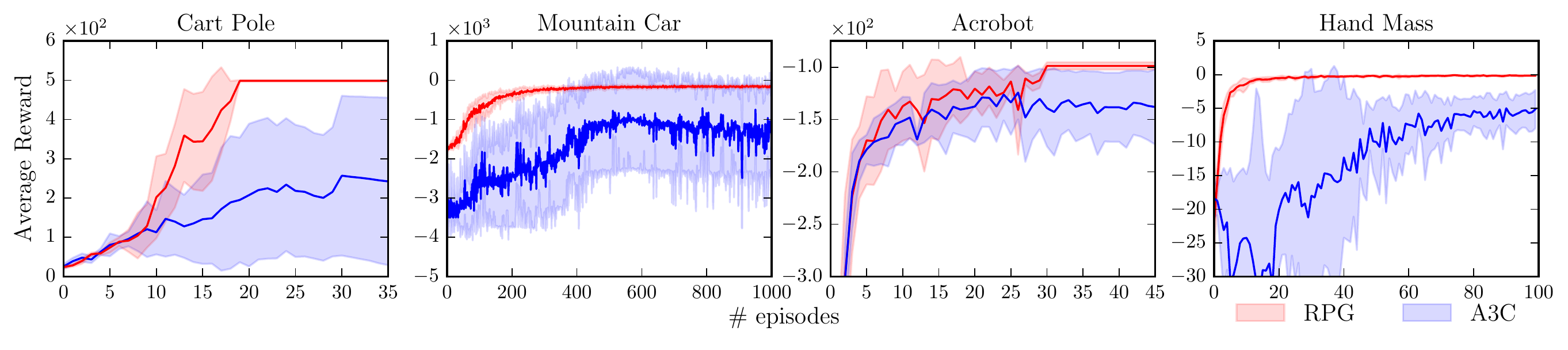}
\end{center}
\vspace{-0.3cm}
\caption{Mean rewards over $5$ random seeds for classical control tasks. Performance of RPG is shown against A3C.}
\label{fig:all_curves}
\end{figure*}

We evaluate differently depending on the task: since Cartpole and Acrobot have a fixed reward threshold at which they are considered solved, for these we present the number of training samples needed to reach that performance (in Table~\ref{tab:samples}). In contrast, for Mountain Car and Hand Mass, we report (in Table~\ref{tab:perfspring}) the reward achieved with a fixed number of samples. Both of these metrics are meant to evaluate sample-efficiency. 

\paragraph{Sample Complexity} As shown in Tables~\ref{tab:samples} and \ref{tab:perfspring}, our algorithm clearly outperforms A3C across all tasks, requiring between $1.7$ and $3$ times less samples to solve the tasks and showing significantly better performance for the same number of samples. As shown in Table~\ref{tab:perfspring}, RPG performs better than CEM in Hand Mass and within $20\%$ for Mountain Car, \emph{despite using $20$ times less samples}. We also note that CEM is particularly well suited for Acrobot, as it is a derivative-free method that can explore the space of parameters very fast and find near-optimal parameters quickly when the optimal policies are fairly simple, explaining its impressive sample-complexity on this specific task.
We additionally point out that full plots of performance against number of samples are reported for all tasks in Figure~\ref{fig:all_curves}, and the numbers in Tables~\ref{tab:samples} and \ref{tab:perfspring} can all be extracted from there.
\paragraph{Robustness, Variance and Training Stability} Overall, our method was very robust to the choice of hyper-parameters such as learning rate or architecture. Indeed, our policy was trained on all tasks with the same learning rate $\alpha = 10^{-2}$, whereas different learning rates were cross-validated for A3C. When examining the training curves, our estimator demonstrates significantly less variance than A3C and a more stable training process. In tasks where the challenges are exploratory (Acrobot or Mountain Car), RPG's exploration process is guided by its model of the dynamics while A3C's is completely undirected, often leading to total failure. The same phenomenon can be observed on control tasks (Hand Mass or Cart Pole), where A3C favors bad actions seen in high reward trajectories, leading to unstable control.

\paragraph{Approximate Reward} On all tasks except Hand Mass, our estimator was trained using approximate smoothed rewards. The performances reported in Figure~\ref{fig:all_curves} and Table~\ref{tab:perfspring} show that this did not impair training at all. Regarding the baselines, it is interesting to note that since CEM does not leverage the structure of the RL problem, it is natural to train with the real rewards. For A3C, we experimented with the smoothed rewards and obtained comparable numbers.

\begin{table}[t]
\begin{center}
\begin{small}
\begin{tabular}{lc}
\toprule
Method & Samples until solve\\
\midrule
\multicolumn{2}{c}{Cart Pole (Threshold = $495$)}\\
\midrule
CEM & $470$ \\
A3C & $279$ \\
RPG & $91$ \\
\midrule
\multicolumn{2}{c}{Acrobot (Threshold = -105)}\\
\midrule
CEM & $180$ \\
A3C & $541$ \\
RPG & $311$ \\
\bottomrule
\end{tabular}
\end{small}
\end{center}
\caption{Average numbers of samples until the task is solved for the Cart Pole and Acrobot tasks for RPG, A3C and CEM.}
\label{tab:samples}
\end{table}

\begin{table}[t]
\begin{center}
\begin{small}
\begin{tabular}{lcc}
\toprule
Method & Samples & Performance\\
\midrule
\multicolumn{3}{c}{Hand Mass ($150$ episodes)}\\
\midrule
CEM & $50$ & $-0.086 \pm 0.01$ \\
A3C & $2$ & $-2.42 \pm 2.19$ \\
RPG & $2$ & $-0.026 \pm 0.01$ \\
\midrule
\multicolumn{3}{c}{Mountain Car ($1000$ episodes)}\\
\midrule
CEM & $200$ & $ -110.3 \pm 1.2$ \\
A3C & $10$ & $ -176 \pm 38.6$ \\
RPG & $10$ & $-131.3 \pm 5.3$ \\
\bottomrule
\end{tabular}
\end{small}
\end{center}
\caption{Average mean rewards for the Hand Mass and Mountain Car tasks for RPG, A3C and CEM.}
\label{tab:perfspring}
\end{table}

\section{Limitations}

In this section, we explore limitations of our method. To leverage the dynamics of the RL problem, we trade-off some flexibility in the class of problems that model-free RL can tackle for better sample complexity. This can also be seen as an instance of the bias/variance trade-off.

\paragraph{Limitations on the reward function} While we presented a general way to extend this estimator for rewards in the discrete domain, such function approximations can be difficult to construct for high-dimensional state-spaces such as Atari games. Indeed, one would have to fit the indicator function of a very low dimensional manifold - corresponding to the set of images encoding the state of a given game score - living in a high-dimensional space (order of $\mathbf{R}^{200 \times 200 \times 3}$). 

\paragraph{Limitations on the type of tasks} While we show results on classical control tasks, our estimator is broadly applicable to all tasks where the dynamics can be estimated reasonably well. This has been shown to be possible on a number of locomotion and robotics tasks \cite{levine2014learning,heess2015learning}. However, our work is not directly applicable to raw pixel input tasks such as the Atari domain. 

\paragraph{Computational overhead} Compared to REINFORCE, our model presents some computational overhead as it requires evaluating $\left(\nabla_\phi x_t\right)_{t\leq T}$ as well as fitting $T$ dynamics matrices. In practice, this is minor compared to other training operations such as sampling trajectories or computing necessary gradients. In our experiments, computing the $\left(\nabla_\phi x_t\right)_{t\leq T}$ amounts to less than $1\%$ of overhead while dynamics estimation constitutes about $17\%$.

\section{Discussion} 

In this work, we presented a method to find an optimal deterministic policy for any action space and in particular discrete actions. This method relies on a relaxation of the optimization problem to a carefully chosen, larger class of stochastic policies. On this class of policies, we can derive a novel, low-variance, policy gradient estimator, Relaxed Policy Gradient (RPG), by differentiating approximate dynamics. We then perform gradient-based optimization while slowly annealing the stochasticity of our policy, converging to a deterministic solution.
We showed how this method can be successfully applied to a collection of RL tasks, presenting significant gains in sample-efficiency and training stability over existing methods. Furthermore, we introduced a way to apply this algorithm to non-continuous reward functions by learning a policy under a smooth surrogate reward function, for which we provided a construction method. It is also important to note that our method is easily amenable to problems with stochastic dynamics, assuming one can re-parameterize the noise.


This work also opens the way to promising future extensions of the estimator. For example, one could easily incorporate imaginary rollouts \cite{gu2016continuous} with the estimated dynamics model or trust regions \cite{schulman2015trust}. Finally, this work can also be extended more broadly to gradient estimation for any discrete random variables.


\paragraph{Acknowledgments} The authors thank Aditya Grover, Jiaming Song and Steve Mussmann for useful discussions, as well as the anonymous reviewers for insightful comments and suggestions. This research was funded by Intel Corporation, FLI, TRI and NSF grants $\#1651565$, $\#1522054$, $\#1733686$.
\bibliographystyle{plain}
\bibliography{sample}

\clearpage
\newpage

\appendix

\section{Appendix}
\subsection{Tasks specifications}\label{app:tasks}
\paragraph{Cart Pole} The classical Cart Pole setting where the agent wishes to balance a pole mounted on a pivot point on a cart. Due to instability, the agent has to perform continuous cart movement to preserve the equilibrium. The agent receives reward $1$ for each time step spend upwards. The task is considered solved if the agent reaches a reward of $495$.

\paragraph{Acrobot} First introduced in \cite{sutton1996generalization}, the system is composed of two joints and two links. Initially hanging downwards, the objective is to actuate the links to swing the end of lower link up to a fixed height. The path's reward is determined by the negative number of actions required to reach the objectif. The task is considered solved if the agent reaches a reward of $-105$.

\paragraph{Mountain Car} We consider the usual setting presented in \cite{sutton1996generalization}. The objective is to get a car to the top of a hill by learning a policy that uses momentum gained by alternating backward and forward motions. The state consists of $(x, \dot{x})$, where $x$ is the horizontal position of the car. The reward can be expressed as $r(x) = \mathbf{1}_{x\geq \frac{1}{2}}$. The task's difficulty lies in both the limitation of maximum tangential force and the need for exploration. The agent receives a reward of $1$ We use a surrogate smoothed reward for DCTPG. We evaluate the performance after a fixed number of episodes.

\paragraph{Hand Mass} We consider a physical system first presented in \cite{munos2006policy}. The agent is holding a spring linked to a mass. The agent can move in four directions. The objective is to bring the mass to a target point $(x_t, y_t)$ while keeping the hand closest to the origin point $(0, 0)$. The state can be described as $s = (x_0, y_0, x, y, \dot{x}, \dot{y})$ and the dynamics are described in \cite{munos2006policy}. At the final time step, the agent receives the reward $r(s) = -(x-x_t)^2 - (y-y_t)^2 -x_0^2 - y_0^2$.
\subsection{Proof of Theorem~\ref{th:approximation}}
\begin{lem}\label{lem:var} Let $X$ be a random variable of $\mathbf{R}^n$ of bounded variance. Let $\Sigma = \mathrm{Var}X$.

$$\mathbf{E}||X - \mathbf{E}X|| \leq \sqrt{\mathrm{tr}\Sigma}$$

\end{lem}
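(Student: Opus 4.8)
The plan is to reduce the statement to a single application of Jensen's inequality, combined with the elementary identity relating the trace of a covariance matrix to the coordinate-wise variances. First I would center the variable by writing $Y \triangleq X - \mathbf{E}X$, so that $\mathbf{E}Y = 0$ and the target inequality becomes $\mathbf{E}\|Y\| \leq \sqrt{\mathrm{tr}\,\Sigma}$, where $\Sigma = \mathrm{Var}\,X = \mathbf{E}[YY^\top]$.

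The key observation is that the map $t \mapsto \sqrt{t}$ is concave on $[0,\infty)$. Applying Jensen's inequality to the nonnegative random variable $\|Y\|^2$ therefore gives $\mathbf{E}\|Y\| = \mathbf{E}\sqrt{\|Y\|^2} \leq \sqrt{\mathbf{E}\|Y\|^2}$. It then remains only to identify the quantity under the square root. Expanding the squared Euclidean norm coordinate-wise, $\|Y\|^2 = \sum_{i=1}^n Y_i^2$, and taking expectations yields $\mathbf{E}\|Y\|^2 = \sum_{i=1}^n \mathbf{E}[Y_i^2] = \sum_{i=1}^n \mathrm{Var}(X_i)$. Since the diagonal entries of the covariance matrix $\Sigma$ are precisely the coordinate variances $\mathrm{Var}(X_i)$, this sum equals $\mathrm{tr}\,\Sigma$. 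Chaining the two displays gives $\mathbf{E}\|X - \mathbf{E}X\| \leq \sqrt{\mathrm{tr}\,\Sigma}$, as claimed.

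I do not expect a genuine obstacle here; the argument is essentially a one-line concavity bound. The only points requiring mild care are the finiteness of the second moment, which is guaranteed by the bounded-variance hypothesis and both legitimizes the use of Jensen's inequality and ensures $\mathbf{E}\|Y\|^2 = \mathrm{tr}\,\Sigma < \infty$, and the standard fact that $\mathrm{tr}\,\Sigma = \sum_i \mathrm{Var}(X_i)$. This lemma is exactly the ingredient needed in Theorem~\ref{th:approximation}: it converts the per-step variance bound $\sup_x \mathrm{tr}[\mathrm{Var}_{\pi_{\phi,\lambda}(\cdot|x)}\,a] \leq M$ into a bound of the form $\sqrt{M}$ on the expected one-step deviation between the relaxed and true dynamics, which is then propagated through the trajectory via the Lipschitz and contraction ($\alpha < 1$) assumptions.
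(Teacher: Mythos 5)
Your proof is correct and takes essentially the same route as the paper's: both reduce the claim to the intermediate bound $\mathbf{E}\|X-\mathbf{E}X\| \leq \sqrt{\mathbf{E}\|X-\mathbf{E}X\|^2}$ and then identify $\mathbf{E}\|X-\mathbf{E}X\|^2 = \sum_i \mathrm{Var}(X_i) = \mathrm{tr}\,\Sigma$ coordinate-wise. The only cosmetic difference is that you derive the first step from Jensen's inequality via concavity of $t \mapsto \sqrt{t}$, whereas the paper applies Cauchy--Schwarz against the constant function $1$ --- two standard phrasings of the same $L^1 \leq L^2$ inequality on a probability space.
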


\begin{proof}
We can simply use Cauchy-Schwarz inequality. Let's define $\mu = \mathbf{E}X$.

$$
\begin{aligned}
\mathbf{E}||X - \mathbf{E}X|| & = \int_x ||x-\mu||p(x)\mathrm{d}x \\
& \leq \sqrt{\int_x ||x - \mu||^2 p(x)\mathrm{d}x}\sqrt{\int_x p(x)\mathrm{d}x}\\
& \leq \sqrt{\int_x \sum_i (x_i - \mu_i)^2p(x)\mathrm{d}x}\\
& \leq \sqrt{\sum_i \mathrm{Var}X_i}\\
& \leq \sqrt{\mathrm{tr}\Sigma}
\end{aligned}
$$
\end{proof}
Let's prove this by recursion.

\begin{align}
||x_{t+1} - \tilde{x}_{t+1}|| & = ||\mathbf{E}_{a\sim \pi_\phi(\cdot|\tilde{x}_t)}\left[f(x_t, \pi_\theta(x_t)) - f(\tilde{x}_t, a) \right]|| \\
& \leq \mathbf{E}_{a\sim \pi_\phi(\cdot|\tilde{x}_t)}||f(x_t, \pi_\theta(x_t)) - f(\tilde{x}_t, a)||\\
& \leq \rho ||x_t - \tilde{x}_t|| + \rho\mathbf{E}_{a\sim \pi_\phi(\cdot|\tilde{x}_t)} ||a - \pi_\theta(x_t)|| \\
& \leq \rho ||x_t - \tilde{x}_t|| + \rho\mathbf{E}_{a\sim \pi_\phi(\cdot|\tilde{x}_t)} ||a - \pi_\theta(\tilde{x}_t)|| \nonumber \\ 
& \qquad \qquad \qquad+ \rho||\pi_\theta(\tilde{x}_t) - \pi_\theta(x_t)|| \\
& \leq \rho(\rho' +1) ||x_t - \tilde{x}_t|| + \rho \sqrt{M}
\end{align}

(2) is obtained with triangular inequality. (3) because $f$ is $\rho$-lipschitz. (4) with triangular inequality and because $\mathbf{E}_{a\sim\pi_\phi(\cdot|x)}a = \pi_\theta(x)$. (5) because of Lemma~\ref{lem:var} and the uniform bound on variances.

Given this inequality and the fact that $x_0 = \tilde{x}_0$, we can compute the geometric sum i.e. $$||x_T - \tilde{x}_T|| \leq \frac{1-[\rho(\rho'+1)]^T}{1-\rho(\rho'+1)}\sqrt{M}$$

$\alpha = \rho(\rho'+1) < 1$ concludes the proof.

\subsection*{Proofs of Theorem~\ref{th:rpg}}

As in the model-based method presented in Background, we want to directly differentiate $J(\phi)$. As before, we require the reward function to be differentiable. $\nabla_\phi J = \nabla_x r \cdot \nabla_\phi x_T$, but when computing $\nabla_\phi x_T$, we need to differentiate through the relaxed equation of the dynamics:
$$
\begin{aligned}
\nabla_\phi x_{t+1} & = \nabla_\phi \sum_a f(x_t, a)\pi_\phi(a|x_t) \\
& = \sum_a \left[ \pi_\phi(a|x_t) \nabla_x f \cdot \nabla_\phi x_t \right. \\ 
	& \left. \hspace{35pt} + f(x_t, a)\left(\nabla_\phi \pi_\phi + \nabla_x\pi_\phi \cdot \nabla_\phi x_t\right)\right] \\
& = \mathbf{E}_{\pi_\phi}\left[ \nabla_x f \cdot \nabla_\phi x \right. \\ 
	& \left. \hspace{35pt} + f(x_t, a)\left( \nabla_\phi \log \pi_\phi + \nabla_x \log \pi_\phi \cdot \nabla_\phi x_t \right)\right]
\end{aligned}
$$

The recursion corresponds to a Monte-Carlo estimate with trajectories obtained by running the stochastic policy in the environment, thus concluding the proof.

\subsection{Proof of Proposition~\ref{prop:urysohn}}\label{proof:prop3}
Given a compact $K$ of $\mathbf{R}^n$ and an open neighborhood $\Omega \supset K$, let us show that there exists $\psi \in \mathcal{C}^\infty(\mathbf{R}^n, \mathbf{R})$ s.t. $1_K \leq \psi \leq 1_\Omega$. While according to Urysohn's lemma \cite{alexandroff1924theorie}, such continuous functions exist in the general setting of normal spaces given any pair of disjoint closed sets $F$ and $G$, we will show how to explicitly construct $\psi$ under our simpler assumptions.

It is sufficient to show that we can construct a function $f \in \mathcal{C}^\infty(\mathbf{R}^n, \mathbf{R})$ such that for $\Omega$ open set containing the closed unit ball, $f$ is equal to 1 on the unit ball and $f$ is null outside of $\Omega$. Given this function, we can easily construct $\psi$ by scaling and translating.

Let us define $\phi(x) \triangleq \exp(-\frac{1}{1 - ||x||^2})\mathbf{1}_{||x|| \leq 1}$. This function is smooth on both $\mathcal{B}(0, 1)$ and $\mathcal{B}(0, 1)^C$.


Let us define $F_n(x) \triangleq \int_{\mathcal{B}(0, n)} \phi(t+x)\mathrm{d}\mu(t)$. This function is strictly positive on $\mathcal{B}(0, n)$ and null outside of $\mathcal{B}(0, n+1)$. For $n$ large enough, the function $f(x) = F_n(\frac{x}{n})$ satisfy the desired property, thus proving that we can construct $\psi$.

Figure~\ref{fig:smooth} shows an example of a smooth surrogate reward for the Mountain Car task.

\begin{figure}[bt]
\centering
\includegraphics[width=0.5\textwidth]{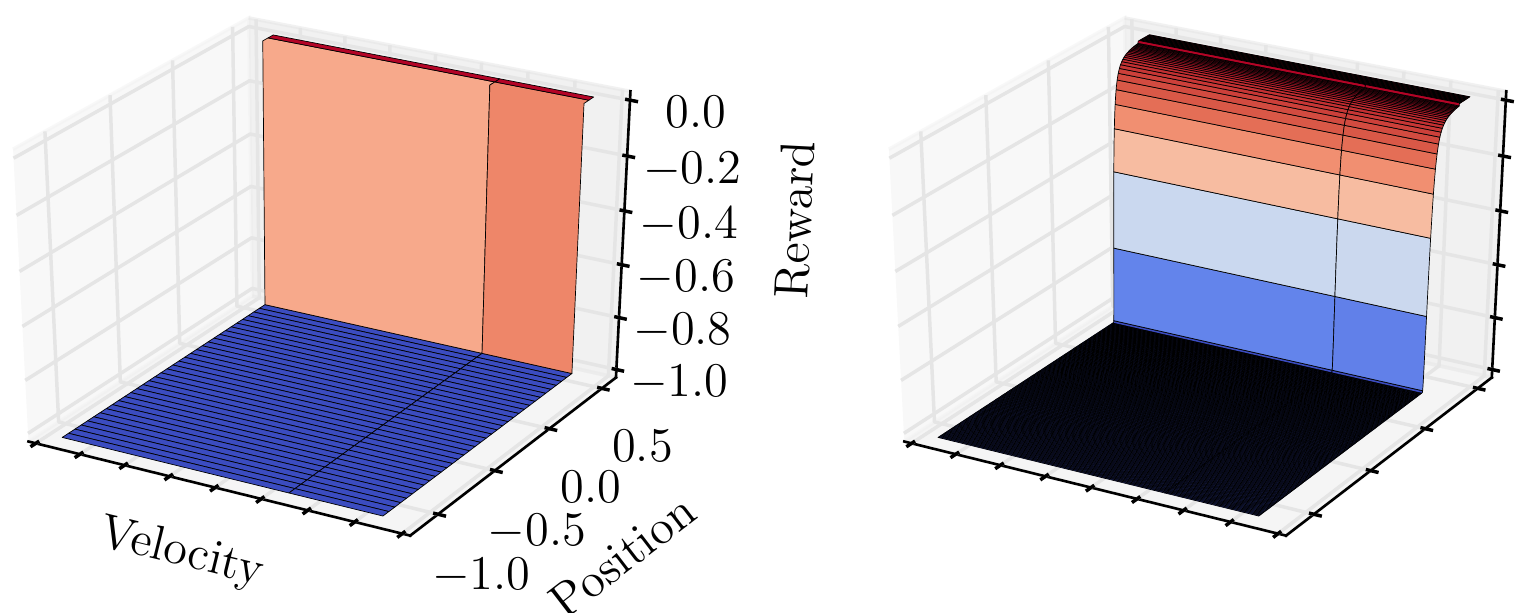}
\caption{Surrogate smooth reward for the Mountain Car task. \textit{Left:} True reward. \textit{Right:} Smoothed surrogate reward.}
\label{fig:smooth}
\end{figure}

\subsection{Proof of Proposition~\ref{prop:surrogate}}\label{proof:prop4}

Let us consider the deterministic case where $r = \mathbf{1}_K$ and the reward is terminal. Let $\pi^*$ be the optimal policy under reward $r$. 

There exists a sequence of open neighborhoods $\left(\Omega_n\right)_{n\in\mathbf{N}}$ s.t. $\Omega_n \subset \Omega_{n+1}$ and $\mathrm{diameter}(\Omega_n - K)$ tends to $0$. Let $r_n$ be the smooth reward function constructed for $\mathbf{1}_K$ with neighborhood $\Omega_n$. Let $\pi_n$ be the deterministic optimal policy under reward $r_n$.

For a start state $x$, we can distinguish between two cases:
\begin{enumerate}
\item $V^{\pi^*}(x) = 0$
\item $V^{\pi^*}(x) = 1$
\end{enumerate}

For the former, no policy can reach the compact and thus the surrogate reward make no difference. For the latter, the policy $\pi_n$, take $x$ to final state $x^n_T$ where $x^n_T \in \Omega_n$. However, $\mathrm{d}(x^n_T, K) \leq \mathrm{diameter}(\Omega_n - K)$ which tends to $0$. The $(x^n_T)_{n\in\mathbf{N}}$ is a Cauchy sequence in a complete space and thus converges to a limit $x^*_T$. Since $\lim_{n\to\infty} \mathrm{d}(x^n_T, K) = 0$ and by continuity of the distance to a compact, $x^*_T \in K$ proving the result.

\end{document}